% !TEX root = paper_accepted_version_ArXiV.tex
\documentclass[journal]{IEEEtran}
\usepackage[compress]{cite}
\usepackage{amsmath,amsfonts}
\usepackage{amsthm}
\usepackage{url,xspace}
\usepackage{xcolor}
\usepackage{graphicx}
\usepackage[ruled,vlined]{algorithm2e}
\usepackage{etoolbox}
\usepackage{nicematrix} % to display the blocs in matrices

\newcommand{\Change}[1]{#1}

\makeatletter 
% Remove right hand margin in algorithm
\patchcmd{\@algocf@start}% <cmd>
  {-1.5em}% <search>
  {0pt}% <replace>
  {}{}% <success><failure>
\makeatother

\newtheorem{proposition}{Proposition}

\newcommand{\V}[1]{\boldsymbol{#1}}
\newcommand{\M}[1]{\mathbf{#1}}

\newcommand{\algoname}{MuChaPro\xspace}
%\SetAlgorithmName{Algorithm}{}

% Redefine the algorithm caption to use a custom name
%\captionsetup[algorithm]{name=Algorithm,labelsep=colon}
%\renewcommand{\thealgorithm}{\algoname}

\begin{document}

\title{Just Project!\\ Multi-Channel Despeckling, the Easy Way}

\author{Loïc Denis,~\IEEEmembership{Senior Member,~IEEE},
\thanks{Loïc Denis is with Université Jean Monnet Saint-Etienne, CNRS, Institut d Optique Graduate School, Laboratoire Hubert Curien UMR 5516, F-42023, SAINT-ETIENNE, France and also with LTCI, Télécom Paris, Institut Polytechnique de Paris, Palaiseau, France, (e-mail: loic.denis@univ-st-etienne.fr).}% <-this % stops a space
Emanuele~Dalsasso, \thanks{Emanuele Dalsasso is with Environmental Computational Science and Earth Observation (ECEO) laboratory, EPFL, 1951 Sion Switzerland.}
Florence~Tupin,~\IEEEmembership{Senior Member,~IEEE}%
\thanks{Florence Tupin is with 
LTCI, Télécom Paris, Institut Polytechnique de Paris, Palaiseau, France.}% <-this % stops a space
}

% The paper headers
\markboth{}{}%

\maketitle

\begin{abstract}
Reducing speckle fluctuations in multi-channel SAR images is essential in many applications
of SAR imaging such as polarimetric classification or interferometric height estimation. While single-channel
despeckling has widely benefited from the application of deep learning techniques, extensions
to multi-channel SAR images are much more challenging.

This paper introduces \algoname, a generic framework that exploits existing single-channel
despeckling methods. The key idea is to generate numerous single-channel projections, restore
these projections, and recombine them into the final multi-channel estimate. This simple
approach is shown to be effective in polarimetric and/or interferometric modalities. A special
appeal of \algoname is the possibility to apply a self-supervised training strategy to learn
sensor-specific networks for single-channel despeckling.

%A self-supervised despeckling approach based on the decomposition of single-look complex SAR images into their real and imaginary components has recently been introduced under the name MERLIN. At its core is the observation that, under Goodman's fully developed speckle model, the real and imaginary parts of single-channel SAR images are \emph{independent and identically distributed}. The paper commented here, PolMERLIN, proposes an extension to multi-channel SAR images such as polarimetric SAR (PolSAR) images. This extension is based on the independence between real and imaginary parts of PolSAR images, but this independence generally does not hold, as shown here.
\end{abstract}

% Note that keywords are not normally used for peerreview papers.
\begin{IEEEkeywords}
SAR polarimetry, SAR interferometry, despeckling, self-supervised learning
\end{IEEEkeywords}

\IEEEpeerreviewmaketitle

\section{Introduction}
Synthetic aperture radar (SAR) imaging is an invaluable technique for Earth observation
due to its unique ability to see through clouds and its sensitivity to surface
roughness and soil moisture. Beyond providing an image of the back-scattered echo
intensities, the polarimetric and interferometric modes offer additional information
about the scattering mechanisms (single, double, or triple bounces, surface or volume
scattering), the topography (elevation, displacement), or the 3D location of scatterers.
All these features are achieved thanks to coherent measurement and processing of the
SAR signals. Due to the use of coherent illumination, the speckle phenomenon arises:
constructive or destructive interferences occur within each resolution cell due to the
coherent summation of several back-scattered echoes. Speckle manifests itself in SAR images
in the form of strong fluctuations of the intensity. In multi-channel SAR imaging, it
corrupts the interferometric phase and the polarimetric covariance matrix, making the
analysis of these images and their automatic processing challenging.

Speckle fluctuations can be reduced by averaging pixel values within a small window.
To reach a satisfying amount of smoothing, tens of pixels must be averaged, which implies
a severe blurring of the image structures as well as notable errors when mixing, within 
a given window, scatterers with very different backscattering power
\cite{formont2010statistical}. More refined filtering strategies are required to reduce
speckle fluctuations while preserving the spatial resolution. Many different approaches
have been imagined: some based on the selection of pixels within homogeneous oriented windows
\cite{lee1999polarimetric}, clustered by region-growing \cite{vasile2006intensity},
identified based on patch similarity \cite{chen2010nonlocal,deledalle2014nl,sica2018insar};
other approaches follow a variational approach \cite{hongxing2014interferometric, 
nie2016nonlocal,deledalle2017mulog,deledalle2018very} \Change{or consider Beltrami flows to operate the filtering operation on manifolds \cite{amao2019beltrami}}. More recently, deep neural networks
have led to very successful despeckling techniques \cite{fracastoro2021deep,rasti2021image, zhu2021deep}. 

The standard way to train a deep neural network is through supervised learning, i.e., by
training on pairs formed by a speckled image (provided as input to the network) and the
corresponding speckle-free image (the expected output of the network). Generating a
training set with such image pairs is challenging, in particular for multi-channel SAR
imaging. To obtain the speckle-free image corresponding to an image corrupted by speckle,
the easiest way is to start with a speckle-free image and simulate synthetic speckle.
Speckle-free images can be obtained by degrading the resolution of very-high-resolution
images (i.e., by spatial multi-looking) or by averaging long time series (i.e., by
temporal multi-looking). While temporal multi-looking may be applied to SAR polarimetry (PolSAR),
it generally does not work in SAR interferometry (InSAR): the availability of time series
of interferograms with a constant baseline and negligible coherence between interferograms
is generally not a realistic scenario. Simulating speckle while accounting for the spatial
correlations due to the SAR transfer function (zero-padding and spectral apodization) and the
spatial and temporal decorrelations in InSAR can be challenging. Yet, it is necessary in order
to obtain networks that perform well on actual SAR data. %The use of advanced SAR simulators
The difficulty of producing training sets for supervised learning increases with the
dimensionality of multi-channel SAR images, representing a real limit for multi-baseline
InSAR, PolInSAR or SAR tomography applications.

\begin{figure*}[!t]
    \includegraphics[width=\textwidth]{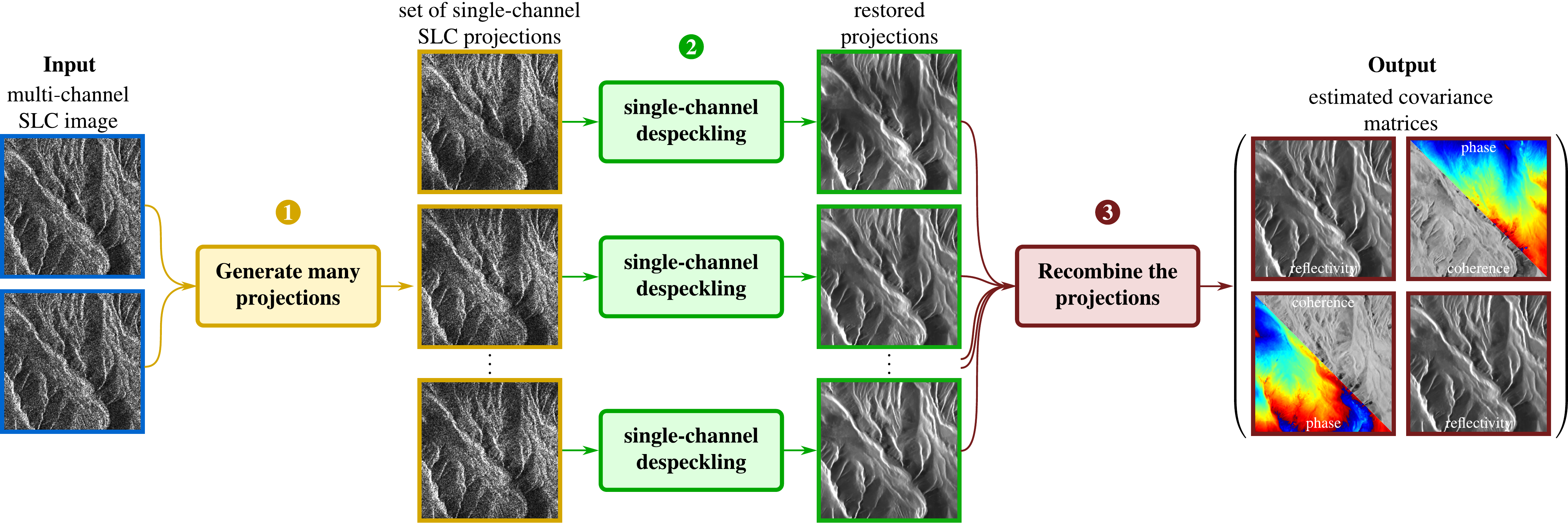}
    \caption{Scheme of the \algoname method introduced in this paper, illustrated here on SAR interferometry: multi-channel SAR images are projected
    into single-channel SLC images which are, after despeckling, recombined to produce the final estimate.}
    \label{fig:ppe}
\end{figure*}

To circumvent the difficulty of building training sets for supervised learning, self-supervised
approaches have been proposed for SAR despeckling \cite{dalsasso2022self}. Self-supervised
despeckling strategies share a common principle: the splitting of the noisy observations into two
subsets, one provided to the despeckling network, the other only available to evaluate the 
training loss function. Methods differ in the way this splitting is performed:
\begin{enumerate}
    \item In SAR2SAR \cite{dalsasso2021sar2sar}, images captured at different
    dates are used: a reference date is processed by the network and compared, during training,
    to a second noisy date. To account for possible changes occurring between these two dates,
    a change-compensation step is added.
    \item Blind-spot approaches \cite{laine2019high} consist 
    of designing a specific network architecture such that the receptive field (i.e., the
    area in the input image used to predict the value at a given pixel of the output image)
    excludes the central pixel. The noisy value of that pixel can then be used to evaluate
    a training loss (the log-likelihood, conditioned by the neighborhood made available to the
    network). In Speckle2Void \cite{molini2021speckle2void}, Molini \emph{et al.} extend
    this strategy to SAR despeckling. An alternative is to use more conventional network
    architectures and apply a preprocessing step to mask out some pixels and use their
    value to evaluate the training loss, as suggested in \cite{sica2024cohessl}.
    \item MERLIN \cite{dalsasso2021if} uses the real and imaginary decomposition of a
    single-look complex SAR image and the property that the speckle in each image of this
    decomposition is statistically independent of one another (under some assumptions
    on the SAR transfer function, see \cite{dalsasso2023self}).
\end{enumerate}

A radically different approach is the plug-and-play framework MuLoG \cite{deledalle2017mulog,deledalle2022speckle},\Change{\cite{mendes2024robustness}}.
Derived from a variational framework and the Alternating Directions Method of Multipliers (ADMM)
\cite{boyd2011distributed}, it consists first of decomposing a multi-channel SAR image into
real-valued channels after a matrix-logarithm is applied to the noisy covariance matrices that
capture the polarimetric and/or interferometric information at each pixel, then alternatingly
denoising these real-valued channels separately with a conventional graylevel image denoising
algorithm (based on a deep neural network or not) and recombining all channels within a 
non-linear operation. MuLoG presents the advantage of being applicable to SAR images of
various dimensionality and to readily include pre-trained networks capable of denoising images
corrupted by additive white Gaussian noise. The drawback is the lack of specialization to
a specific SAR sensor. In particular, it does not account for the spatial correlations of
speckle. Since it is based on the matrix-log decomposition of covariance matrices, some amount
of spatial smoothing is necessary, in particular when the dimensionality of the images
increases (in multi-baseline interferometry and SAR tomography).

Deep neural networks have been designed specifically to estimate the
phase and coherence of an interferometric pair. The $\Phi$-Net \cite{sica2021net}
is trained in a supervised way. It applies a decorrelation of the real and imaginary
components of the complex interferogram to form the two channels provided as input to
the network. InSAR-MONet \cite{vitale2022insar} uses a multi-term loss function to
restore the interferometric phase. Digital elevation models and empirical coherence
maps were used to generate pairs of simulated noisy/noise-free phase images. The loss
combines spatial terms, to ensure that the restored phase is close to the ground-truth
phase, and a statistical term to enforce that the estimated noise component follows the
expected distribution.

Specific networks have also been proposed for SAR polarimetry, generally following a
supervised training methodology. In \cite{DL_polsar_despeck1}, the same matrix-logarithm
transform as in MuLoG is applied first, then complex-valued operations are applied
throughout the network (i.e., the real and imaginary parts are not extracted and
processed like a two-channels image but rather complex-convolutions, complex-activations,
 and complex-batch normalizations are applied). Tucker and Potter \cite{DL_polsar_despeck2}
 also apply a log-transform but then perform a real-imaginary decomposition and apply
 a real-valued residual network to estimate the log-transformed speckle component.
A different approach is followed in \cite{lin2023residual} where a network is trained,
in a supervised way, to produce weights leading to a weighted combination of the
neighboring pixels as close as possible to a ground truth.
 By considering pairs of polarimetric images acquired at close dates, a
 network is trained without reference in \cite{li2024sentinel}. Unlike SAR2SAR
 \cite{dalsasso2021sar2sar} \Change{and its recent extension to polarimetric images PolSAR2PolSAR \cite{mendes2024polsar2polsar}}, no compensation for changes is applied, which could represent a 
 serious limitation in quickly evolving areas (e.g., agricultural fields, changes of 
 soil moisture). \Change{To avoid issues with changes and obtain speckle-free covariance matrix estimates, reference \cite{lu2023training} combines temporal averaging and the generation of synthetically-corrupted images to produce a suitable training set. The approach proposed in \cite{mestre2024deep} also relies on temporal averaging but selects patches without change (detected using an Omnibus Test) rather than generating patches with synthetic speckle.}

\emph{Our contributions:} This paper introduces a radically different approach to InSAR 
and PolSAR despeckling. Although the task requires estimating a multi-dimensional and 
complex-valued covariance matrix at each pixel, we suggest reducing the problem to a 
series of single-channel real-valued image despeckling by working on projections, see
Figure \ref{fig:ppe}.
Such an approach, named ``MuChaPro'' in the following, presents several advantages:
\begin{enumerate}
    \item it greatly simplifies the application of deep neural networks for the estimation
    of polarimetric and~/ or interferometric properties,
    \item a network trained for a given sensor~/ resolution can be readily applied to
    various polarimetric or interferometric configurations,
    \item it prevents some of the issues occurring with increasing covariance matrix
    dimensions, in particular, the augmentation of the network complexity required by
    joint processing of numerous channels and the substantial risk of generalization
    issues (the spatial polarimetric/interferometric patterns become more diverse
    and the gap between training and inference steps increases with the dimensionality),
    \item the proposed approach is generic: it can accommodate any despeckling algorithm,
    including deep neural networks of various architectures as well as algorithms that
    do not use neural networks,
    \item various despeckling algorithms can easily be applied in parallel 
    to compare their outputs and get a better sense of possible artifacts / network
    hallucinations,
    \item our approach provides a way to train networks for polarimetric or interferometric 
    despeckling in a self-supervised way.
\end{enumerate}
All those features come at a cost: by reducing the multichannel despeckling problem to a 
series of independent single-channel despeckling tasks, weakly contrasted geometrical
features do not benefit from the reinforcement brought by joint processing. We show in
the following that the capacity of most recent self-supervised networks to restore 
details in single-channel SAR images mitigates this limitation and makes our approach
competitive.

\section{\algoname: Estimating covariance matrices from projections}
\label{sec:muchapro}

\subsection{A generic framework for multi-channel restoration using
single-channel despeckling techniques}
\label{sec:muchapro:ppe}

In order to obtain a method that readily generalizes to multi-channel SAR images with
an arbitrary number of channels, we propose to project these images into a set of
single-channel images and to despeckle these projections. After despeckling, these
single-channel images can be recombined to form the final multi-channel covariance
estimates. Figure \ref{fig:ppe} summarizes the principle of our method. Since
{\bf Mu}lti-{\bf Cha}nnel {\bf Pro}jections are at the core of the framework and
we perform many projections (``{\bf mucha pro}yecci\'on", in Spanish), we call our
method \algoname.

To explain the rationale behind this approach, it is necessary to recall Goodman's 
fully-developed speckle model \cite{Goodm}. A single-look complex (SLC) multi-channel
SAR image $\V z$
contains at each pixel $\ell$ a vector $[\V z]_\ell\in\mathbb{C}^D$ of $D$ complex
amplitudes. Due to speckle, these complex amplitudes are distributed\footnote{we
neglect here the spatial correlations of speckle due to the SAR transfer function,
in section \ref{sec:muchapro:theo} we discuss the impact of these correlations}
according to a complex circular Gaussian distribution $\mathcal{N}_c(\M C_\ell)$
parameterized by the covariance matrix $\M C_\ell\in\mathbb{C}^{D\times D}$ that contains
the polarimetric and/or interferometric information at pixel $\ell$.

Let $\{\V p_k\}_{k=1..K}$ be a set of $K$ vectors of $\mathbb{C}^D$. The complex-valued
images $\{\V s_k\}_{k=1..K}$ obtained by projection of the multi-channel image $\V z$
onto the corresponding vectors $\V p_k$ are defined by:
\begin{align}
    \forall \ell,\,[\V s_k]_\ell=\V p_k^\dagger \cdot [\V z]_\ell\,,
    \label{eq:projz}
\end{align}
where the notation $^\dagger$ indicates the conjuguate-transpose operation. The value
$[\V s_k]_\ell$ is a complex number, i.e., the image $\V s_k$ corresponds to a
single-channel SLC image. Since the projection $[\V s_k]_\ell$ at pixel $\ell$ corresponds
to a linear transform of $[\V z]_\ell$, it also follows a complex circular Gaussian
distribution, with a variance equal to
$\V p_k^\dagger\cdot\M C_\ell\cdot\V p_k\equiv [\V v_k]_\ell\in \mathbb{R}^+$. The SLC
projections can be seen as regular SAR images, corrupted by single-look speckle, and
with an underlying reflectivity $\V v_k$. The application of a single-channel
despeckling algorithm separately on each of these images produces estimates
$\widehat{\V v}_k$ of the true variance $\V v_k$.

The aim is to recover at each pixel $\ell$ the $D\times D$ covariance matrix $\M C_\ell$.
The definition $[\V v_k]_\ell=\V p_k^\dagger\cdot\M C_\ell\cdot\V p_k$ shows that the variances
$[\V v_k]_\ell$ can themselves be seen as projections of the covariance matrices
$\M C_\ell$:
\begin{align}
    \forall k,\,\forall\ell,\,[\V v_k]_\ell=\V q_k^\dagger \cdot [\V c]_\ell\,,
    \label{eq:covproj}
\end{align}
with $[\V c]_\ell=\text{vec}(\M C_\ell)$ the vectorized form of $\M C_\ell$ (i.e., the values
of matrix $\M C_\ell$ are rearranged %by concatenating each column
to form a column vector of
dimension $D^2$) and vector $\V q_k\in\mathbb{C}^{D^2}$ is defined by 
$\V q_k=\text{vec}(\V p_k\cdot\V p_k^\dagger)$. In other words, by despeckling the
projections $\V z_k$, we recovered \emph{projections of the covariances} $\M C_\ell$. It
remains to invert these projections, a step that is discussed in the next paragraph.

\subsection{Recovering the covariance matrix from its projections}
%Several approaches can be considered to recover, at each pixel $\ell$, the covariance
%matrix $\M C_\ell$. We start with the most straightforward and easy to follow method,
%then propose some refinements in order to impose more structure on the covariance
%matrix, namely Hermitian symmetry, then positive-definiteness.

The simplest and fastest way to recover, at each pixel $\ell$, the covariance
matrix $\M C_\ell$ consists of solving the following linear least-squares problem:
\begin{align}
    \widehat{\M C}_\ell \in \mathop{\text{arg min}}_{\M C_\ell} \sum_{k=1}^K
    \left(\V p_k^\dagger\cdot\M C_\ell\cdot\V p_k-[\widehat{\V v}_k]_\ell\right)^{\!2}.
    \label{eq:argmin1}
\end{align}
Note that the inversion of the projections is independent from one pixel to the next and
can thus be conducted in parallel.
Using the notations introduced in equation (\ref{eq:covproj}), the least-squares problem
can be rewritten in a more conventional form:
\begin{align}
    [\widehat{\V c}]_\ell \in \mathop{\text{arg min}}_{[\V c]_\ell} \left\|
    \M Q^\dagger\cdot[\V c]_\ell-[\widehat{\V v}]_\ell\right\|_2^2,
    \label{eq:argmin2}
\end{align}
with $\M Q$ the $D^2\times K$ matrix with the $k$-th column equal to $\V q_k$ and
$[\widehat{\V v}]_\ell$ the column vector of the $K$ restored reflectivities at
pixel $\ell$ obtained by despeckling the $K$ projections $\{\V s_k\}_{k=1..K}$.

Provided that there are at least $D^2$ linearly independent projection directions
$\V p_k$, the matrix $\M Q\cdot\M Q^\dagger$ is invertible and the least squares
solution is unique and given by:
\begin{align}
    \forall \ell,\,\widehat{\M C}_\ell^{(\text{\algoname})}
    =\text{vec}^{-1}\!\left(\left(\M Q\cdot\M Q^\dagger\right)^{-1}
    \cdot\M Q\cdot[\widehat{\V v}]_\ell\right),
    \label{eq:muchaproformula}
\end{align}
with $\text{vec}^{-1}$ the reshaping operation that transforms a column vector of
size $D^2$ back to a $D\times D$ matrix.

\newcommand{\algocomment}[1]{\hfill\emph{\small #1}}
\begin{algorithm}[t]
    %\TitleOfAlgo{Multi-channel despeckling}
\SetAlgoLined
\SetKwInOut{Input}{input}
\SetKwInOut{Output}{output}
\Input{an $N$-pixels $D$-channels SLC image provided as a matrix $\M Z\in\mathbb{C}^{D\times N}$}
\Input{a single-channel despeckling function $f:\;\mathbb{C}^N\to\mathbb{R}_+^N$ }
\Output{the restored covariance matrices stored as an $D^2 \times N$ matrix
$\widehat{\M C}$
such that the $\ell$-th column contains the $D^2$ values of
$\text{vec}\bigl(\widehat{\M C}_\ell^{(\text{\algoname})}\bigr)$, i.e.,
the covariance matrix
at pixel $\ell$ in vectorized form.}
\BlankLine
\BlankLine
{\bf Step 1}: \emph{generate $K$ SLC images by projection}\\%\setstretch{1.3}
\nl\label{algline:drawdirections}generate a matrix $\M P\in\mathbb{C}^{D\times K}$
whose columns define $K$ projection directions $\{\V p_k\}_{k=1..K}$\\
\nl\label{algline:project} $\M S\gets\M P^\dagger\cdot \M Z$ \algocomment{(compute the $K$ projections)}\\
\BlankLine
{\bf Step 2}: \emph{despeckle the projections}\\%\setstretch{1.3}
\For{$k=1$ {\rm to} $K$}{
    \nl    $\widehat{\M V}_{k,\bullet}\gets f(\M S_{k,\bullet})$
    \algocomment{(despeckle $k$-th single-channel image)}
}
\BlankLine
{\bf Step 3}: \emph{recombine restored projections}\\%\setstretch{1.3}
\nl build matrix $\M Q\in\mathbb{R}^{D^2\times K}$ such that
$\M Q_{\bullet,k}=\text{vec}(\M P_{\bullet,k}^{\phantom{\dagger}}\M P_{\bullet,k}^\dagger)$\\
%$\forall k,\,\M Q_{\bullet,k}=\text{vec}(\M P_{\bullet,k}^{\phantom{\dagger}}\M P_{\bullet,k}^\dagger)$\\
%\nl\label{algline:invertproj}$\widehat{\M C}\gets \left(\M Q\cdot\M Q^\dagger\right)^{-1}
%\cdot\M Q\cdot\M V^{\text{t}}$\algocomment{(recover all covariance matrices)}\\
\nl\label{algline:invertproj}
$\widehat{\M C}\gets \left(\M Q\cdot\M Q^\dagger\right)^{-1}\!\!\cdot\M Q^{\dagger}\cdot\widehat{\M V}
$\algocomment{(recover all covariance matrices)}\\ 
\caption{{\bf \algoname}, Multi-channel despeckling}
 \label{alg:MuChaPro}
\end{algorithm}

%\begin{algorithm}[t]
%    \caption{Multi-channel despeckling}\label{alg:MuChaPro}
%    \begin{algorithmic}[1]
%    \Procedure{MyAlgorithm}{$input\_array$}
%    \State $output\_array \gets \emptyset$
%    \State $i \gets 0$
%    \While{$i < \text{length}(input\_array)$}
%    \State $element \gets input\_array[i]$
%    %\State $output\_array \gets output\_array \cup \{\text{some operation on } element\}$
%    \State $i \gets i + 1$
%    \EndWhile
%    \State \Return $output\_array$
%    \EndProcedure
%    \end{algorithmic}
%\end{algorithm}

%In summary, \algoname consists of the following three steps shown in figure \ref{fig:ppe}:
%\begin{enumerate}
%    \item Generation of a set of $K\geq D^2$ projections:
%\end{enumerate}

%The intensity image $\V v_k=|\V z_k|^2$ of the $k$-th projection
%follows the usual speckle distribution of single-look images

The algorithm \algoname given at the top of page \pageref{alg:MuChaPro} summarizes the 3 steps
of the proposed method by using matrix notation to define each step using linear algebra for an
efficient implementation in languages such as Python or Matlab.

\medskip
The formulation of the least-squares problem (\ref{eq:argmin2}) does not exploit the Hermitian
symmetry of the matrix $\M C_\ell$. Due to this symmetry, there are not $D^2$ \emph{complex}
unknowns but actually $D^2$ \emph{real} unknowns:
\begin{align}
    \M C_\ell =
    \begin{pmatrix}
        \text{C}_{11}   & \text{C}_{12} & \cdots & \text{C}_{1D}\\
        \text{C}_{12}^* & \ddots &\ddots &\vdots\\
        \vdots   & \ddots & \ddots & \text{C}_{D-1\;D}\\
        \text{C}_{1D}^* & \cdots & \text{C}_{D-1\;D}^* & \text{C}_{DD}
    \end{pmatrix}
\end{align}
so the vector of all real-valued unknowns $[\V c]_\ell$ at pixel $\ell$ can be structured
as follows:
\begin{multline}
    [\V c]_\ell = 
    \biggl(
        \text{C}_{11}\;\cdots\; \text{C}_{DD} \;\;
        \mathbb{R}\text{e}\bigl(\text{C}_{12}\bigr)\;\cdots\;
        \mathbb{R}\text{e}\bigl(\text{C}_{D-1\;D}\bigr)\\
        \mathbb{I}\text{m}\bigl(\text{C}_{12}\bigr)\;\cdots\;
        \mathbb{I}\text{m}\bigl(\text{C}_{D-1\;D}\bigr)
    \biggr)^{\!\text{t}}
\end{multline}
and the corresponding matrix $\M Q$ in equation (\ref{eq:argmin2}) is structured into 3
blocs:
\begin{align}
    \M Q =
    \begin{pNiceMatrix}[c,left-margin=0.6em,right-margin=0.6em,cell-space-limits=3pt]
        \Block[draw]{3-3}{}
        \mid    &   & \mid    \\
        |\V p_1|^2 & \cdots   & |\V p_K|^2 \\
        \mid   &    & \mid    \\ \\
        \Block[draw]{3-3}{}
        \mid    &   & \mid    \\
        2\mathbb{R}\text{e}(\M U\V q_1) & \cdots   & 2\mathbb{R}\text{e}(\M U\V q_K) \\
        \mid   &    & \mid    \\ \\
        \Block[draw]{3-3}{}
        \mid    &   & \mid    \\
        -2\mathbb{I}\text{m}(\M U\V q_1) & \cdots   & -2\mathbb{I}\text{m}(\M U\V q_K) \\
        \mid   &    & \mid    \\
    \end{pNiceMatrix},
    \label{eq:matQH}
\end{align}
where $\M U$ is a $D(D-1)\times D^2$ matrix obtained from the identity matrix of size $D^2\times D^2$
by keeping only the rows corresponding to elements above the diagonal, i.e., corresponding
to the upper triangular part of the $D\times D$ matrix $\V p_k\cdot \V p_k^\dagger$.

The expression of the least-squares solution remains unchanged provided that this new
definition for matrix $\M Q$ is used.
As discussed in paragraph \ref{sec:muchapro:selproj}, this parameterization
using a real-valued vector $\V c$ leads to better-conditioned matrices $\M Q\cdot\M Q^\dagger$
and more stable estimates. Note that the least squares solution does not guarantee that
the reconstructed matrix
$\widehat{\M C}_\ell^{(\text{\algoname})}$ is positive definite. 
A post-processing
step can be applied to clip negative reflectivity estimates and coherence values
greater or equal to 1, see algorithm \ref{alg:forcepd}.

\begin{algorithm}[t]
    %\TitleOfAlgo{Multi-channel despeckling}
\SetAlgoLined
\SetKwInOut{Input}{input}
\SetKwInOut{Output}{output}
\Input{a $D\times D$ symmetrical matrix $\M C$}
\Input{thermal noise equivalent reflectivity $R_{\text{thml}}$}
\Input{maximum coherence value $\rho_{\text{max}}<1$}
\Output{a $D\times D$ positive definite matrix $\bar{\M C}$}
\BlankLine
\BlankLine
{\bf Step 1}: \emph{enforce reflectivities at least equal to $R_{\text{thml}}$}\\%\setstretch{1.3}
\For{$d=1$ {\rm to} $D$}{
    \nl    $\bigl[\bar{\M C}\bigr]_{d,d}\gets \max(R_{\text{thml}},\bigl[\M C\bigr]_{d,d})$
    \algocomment{(clip low/negative values)}
}
\BlankLine
{\bf Step 2}: \emph{enforce coherence values below $\rho_{\text{max}}$}\\%\setstretch{1.3}
\For{$d_1=1$ {\rm to} $D$}{
    \For{$d_2=d_1+1$ {\rm to} $D$}{
        \nl $\rho^{-1}\gets
        \max\!\left(\rho_{\text{max}},\frac{\bigl|\bigl[\M C\bigr]_{d_1,d_2}\bigr|}{\sqrt{\bigl[\bar{\M C}\bigr]_{d_1,d_1}\bigl[\bar{\M C}\bigr]_{d_2,d_2}}}\right)^{\!-1}$\\
    \nl    $\bigl[\bar{\M C}\bigr]_{d_1,d_2}\gets \rho_{\text{max}}\cdot\rho^{-1}\cdot \bigl[{\M C}\bigr]_{d_1,d_2}$\\
    \nl    $\bigl[\bar{\M C}\bigr]_{d_2,d_1}\gets \rho_{\text{max}}\cdot\rho^{-1}\cdot \bigl[{\M C}\bigr]_{d_2,d_1}$
    }
}
\caption{Enforcing positive-definiteness}
 \label{alg:forcepd}
\end{algorithm}

\subsection{Theoretical properties of \algoname}
\label{sec:muchapro:theo}

In this paragraph we establish two theoretical results that have strong implications for
the application of \algoname to multi-channel SAR data. The first one indicates that, for
the class of linear filters, multi-channel filtering and \algoname are equivalent. There
is no loss of statistical efficiency when processing single-channel projections rather
than performing the standard multi-channel linear processing. The second one shows that
the real and imaginary parts of the single-look-complex projections are statistically
independent and, therefore, lend themselves to the application of the self-supervised
learning method MERLIN \cite{dalsasso2021if}, in contrast to the case of multi-channel
SAR images for which the real and imaginary parts are in general correlated \cite{10497596}. 

\begin{proposition} \label{prop:equivlin}
    Let $\widehat{\M C}^{(\text{\rm lin})}$ be a linear multi-channel estimator defined by:
    $$\widehat{\M C}^{(\text{\rm lin})}=\sum_{\ell=1}^L w_\ell \cdot [\V z]_\ell
                                        \cdot [\V z]_\ell^\dagger\,,$$
    where $[\V z]_\ell\in\mathbb{C}^D$ is the vector of complex amplitudes at pixel $\ell$
    of the multi-channel SLC image $\V z$.

    Let $\bigl\{\widehat{v}_k^{(\text{\rm lin})}\bigr\}_{k=1..K}$ be the corresponding linear single-channel
    estimators defined by:
    $$\forall k,\,1\leq k \leq K,\;\widehat{v}_k^{(\text{\rm lin})}
    =\sum_{\ell=1}^L w_\ell \cdot |[\V s_k]_\ell|^2$$
    with the same weights $w_\ell$ and $\{\V s_k\}_{k=1..K}$ the $K$ projections of $\V z$
    defined in equation (\ref{eq:projz}).

    If the set of projections $\{\V p_k\}_{k=1..K}$ is such that the matrix $\M Q\cdot\M Q^\dagger$
    in equation (\ref{eq:muchaproformula}) is invertible, then the estimators
    $\widehat{\M C}^{(\text{\rm lin})}$ and $\widehat{\M C}^{(\text{\rm \algoname})}$ applied to
    the linearly despeckled reflectivities $\bigl\{\widehat{\V v}_k^{(\text{\rm lin})}\bigr\}_{k=1..K}$
    coincide.
\end{proposition}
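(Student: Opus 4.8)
The plan is to observe that, for a linear filter, each single-channel reflectivity estimate is \emph{exactly} (not merely approximately) a projection of the multi-channel covariance estimate, so that $\widehat{\M C}^{(\text{lin})}$ realizes a zero-residual point of the least-squares problem (\ref{eq:argmin2}); uniqueness of the least-squares minimizer then forces that point to coincide with $\widehat{\M C}^{(\text{\algoname})}$. Throughout I drop the pixel index $\ell$, since the statement concerns a single fixed window of weights $w_\ell$.

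First I would expand the squared modulus of a projected amplitude. Since $[\V s_k]_\ell=\V p_k^\dagger[\V z]_\ell$ is a scalar,
\[
  |[\V s_k]_\ell|^2=[\V s_k]_\ell\,[\V s_k]_\ell^*
  =\bigl(\V p_k^\dagger[\V z]_\ell\bigr)\bigl([\V z]_\ell^\dagger\V p_k\bigr)
  =\V p_k^\dagger\,[\V z]_\ell[\V z]_\ell^\dagger\,\V p_k .
\]
Inserting this into the definition of $\widehat v_k^{(\text{lin})}$ and moving the scalar weights $w_\ell$ inside the quadratic form, the sum over $\ell$ rebuilds $\widehat{\M C}^{(\text{lin})}$ sandwiched between $\V p_k^\dagger$ and $\V p_k$:
\[
  \widehat v_k^{(\text{lin})}
  =\sum_{\ell=1}^L w_\ell\,\V p_k^\dagger[\V z]_\ell[\V z]_\ell^\dagger\V p_k
  =\V p_k^\dagger\Bigl(\sum_{\ell=1}^L w_\ell[\V z]_\ell[\V z]_\ell^\dagger\Bigr)\V p_k
  =\V p_k^\dagger\,\widehat{\M C}^{(\text{lin})}\,\V p_k .
\]
This is precisely relation (\ref{eq:covproj}) with $\M C_\ell$ replaced by $\widehat{\M C}^{(\text{lin})}$, i.e. $\widehat v_k^{(\text{lin})}=\V q_k^\dagger\,\text{vec}\bigl(\widehat{\M C}^{(\text{lin})}\bigr)$ with $\V q_k=\text{vec}(\V p_k\V p_k^\dagger)$; stacking over $k$ gives $\widehat{\V v}^{(\text{lin})}=\M Q^\dagger\,\text{vec}\bigl(\widehat{\M C}^{(\text{lin})}\bigr)$.

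Then I would substitute $\V c=\text{vec}\bigl(\widehat{\M C}^{(\text{lin})}\bigr)$ into the objective of (\ref{eq:argmin2}): the residual $\M Q^\dagger\V c-\widehat{\V v}^{(\text{lin})}$ vanishes identically, so this choice attains the global minimum (zero). Under the stated hypothesis that $\M Q\M Q^\dagger$ is invertible, the least-squares minimizer is unique and equals the closed form (\ref{eq:muchaproformula}); hence $\widehat{\M C}^{(\text{\algoname})}=\text{vec}^{-1}\bigl((\M Q\M Q^\dagger)^{-1}\M Q\,\widehat{\V v}^{(\text{lin})}\bigr)=\widehat{\M C}^{(\text{lin})}$. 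Equivalently, one may plug $\widehat{\V v}^{(\text{lin})}=\M Q^\dagger\,\text{vec}(\widehat{\M C}^{(\text{lin})})$ straight into (\ref{eq:muchaproformula}) and cancel $(\M Q\M Q^\dagger)^{-1}\M Q\M Q^\dagger=\M I$.

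There is no genuinely difficult step here; the only care needed is bookkeeping. One should check that the identity still holds when the real-valued parameterization of $\V c$ and the block matrix $\M Q$ of (\ref{eq:matQH}) are used in place of the complex vectorization — it does, because $\widehat{\M C}^{(\text{lin})}$ is Hermitian (the $w_\ell$ being real, which is exactly what is needed for the $\widehat v_k^{(\text{lin})}$ to be real-valued) and the block construction is precisely the restriction of $\text{vec}$ to Hermitian matrices. It is also worth emphasizing that the whole argument is purely algebraic: it holds verbatim once the ``linear filter'' despeckling function is applied pixelwise, since the inversion (\ref{eq:argmin2}) decouples across pixels, and no assumption on the speckle statistics is required for this proposition.
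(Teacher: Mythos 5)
Your proof is correct and follows essentially the same route as the paper's: expand $|[\V s_k]_\ell|^2=\V p_k^\dagger[\V z]_\ell[\V z]_\ell^\dagger\V p_k$ to show that $\widehat{\M C}^{(\text{lin})}$ yields a zero residual in the least-squares problem, then invoke uniqueness of the minimizer when $\M Q\M Q^\dagger$ is invertible. Your added remarks on the Hermitian (real-valued) parameterization and the pixelwise decoupling are sound bookkeeping but do not change the argument.
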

\begin{proof}
    The expression of the estimator $\widehat{v}_k^{(\text{\rm lin})}$ can be expanded using equation (\ref{eq:projz}):
    $$\widehat{v}_k^{(\text{\rm lin})}=
    \sum_{\ell=1}^L w_\ell \cdot \bigl|\V p_k^\dagger \cdot [\V z]_\ell\bigr|^2=
    \sum_{\ell=1}^L w_\ell \cdot \V p_k^\dagger \cdot [\V z]_\ell\cdot[\V z]_\ell^\dagger\cdot \V p_k
    $$
    By substituting $\M C$ with the expression of the linear multi-channel estimator
    $\widehat{\M C}^{(\text{\rm lin})}$ and $\widehat{v}_k^{(\text{\rm lin})}$ with its
    expansion, the sum of squares in equation (\ref{eq:argmin1}) becomes:
    \begin{multline*}
        \sum_{k=1}^K
    \left(\V p_k^\dagger\cdot\M C\cdot\V p_k-\widehat{v}_k^{(\text{\rm lin})}\right)^2=\\
    \sum_{k=1}^K
    \biggl(\V p_k^\dagger\cdot\left(\sum_{\ell=1}^L w_\ell \cdot [\V z]_\ell
    \cdot [\V z]_\ell^\dagger\right)\cdot\V p_k\\
    -\sum_{\ell=1}^L w_\ell \cdot \V p_k^\dagger \cdot [\V z]_\ell\cdot[\V z]_\ell^\dagger\cdot \V p_k\biggr)^2
    =0
    \end{multline*}
    
    By definition (see equation (\ref{eq:argmin1})), the estimator $\widehat{\M C}^{(\text{\rm \algoname})}$ minimizes this sum of
    squares. When the matrix $\M Q\cdot\M Q^\dagger$ is invertible, this estimator is uniquely
    defined. Since $\widehat{\M C}^{(\text{\rm lin})}$ reaches the lowest possible value for the
    sum of squares, it corresponds to the least squares estimator, i.e., it coincides with
    \algoname's estimator.
\end{proof}

Note that, since $\widehat{\M C}^{(\text{lin})}$ is linear, weights $w_\ell$ can not
depend on the data $\V{z}$. This result applies to filters like spatial multi-looking (all
weights $w_\ell$ are then equal),
convolutions (i.e., shift-invariant weights),
spatially-adaptive filtering with weights driven solely by an external image
(i.e., guided-filtering, see for example \cite{verdoliva2014optical} when only the optical
image is used to compute the weights) or even given by an oracle.
It does not apply, though, to patch-based methods such as \cite{deledalle2014nl} that compute
weights $w_\ell$ based on the similarity between multi-channel SAR patches.

Proposition \ref{prop:equivlin} implies that, provided that the weights $w_\ell$ match for
multi-channel and single-channel filtering, the despeckling result is the same. In a constant
region, using identical weights for all pixels $\ell$ of the region in the estimator
$\widehat{\M C}^{(\text{lin})}$ leads to the maximum lilelihood
estimator, which is asymptotically efficient. \algoname also is asymptotically efficient
in this case and there is no increased variance or bias in the estimation due to processing
only projections. Yet, the cost of providing only single-channel images to the despeckling
algorithm comes from the impossibility to exploit jointly all channels (i.e., use the
co-location of spatial structures in the different channels). A drop of performance is to
be expected for non-linear estimators due to the independent processing of each projection.
The very good restoration capability of modern single-channel despeckling techniques limits
this impact, as shown in section \ref{sec:appli}. We believe that the advantages of \algoname
outweight this drawback, in particular the possibility to perform a self-supervised training,
as shown by the next proposition:

\begin{proposition} \label{prop:ReImindep}
    If the SAR system response is real-valued then the projections $\{\V s_k\}_{k=1..K}$
    defined in equation (\ref{eq:projz}) have statistically independent real and imaginary
    parts. The self-supervised learning framework MERLIN \cite{dalsasso2021if} is then
    applicable.
\end{proposition}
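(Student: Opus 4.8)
The plan is to follow how the real‑valued‑response hypothesis propagates from the underlying scattering model down to the projected SLC images. First I would recall, following Goodman's model \cite{Goodm} and its refinement for self‑supervised despeckling \cite{dalsasso2023self}, that a multi‑channel SLC image is the output of the SAR system response applied, channel by channel, to an underlying elementary scattering field: $\V z = h \ast \V u$, where $h$ is the (spatial) impulse response --- common to all channels, since they share the same sensor and focusing --- and $\V u$ is a zero‑mean, spatially white, circular complex Gaussian random field whose pointwise cross‑channel covariance at pixel $\ell$ is $\M C_\ell$. The essential feature is that \emph{all} the spatial correlations of speckle are carried by $h$, while $\V u$ itself is white in space.

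Next I would push the projection of equation~(\ref{eq:projz}) through this representation. Because $\V p_k$ acts on the channel index only and is constant in space, it commutes with the spatial convolution, so $\V s_k = \V p_k^\dagger \cdot \V z = h \ast w_k$ with $w_k(\xi) := \V p_k^\dagger \cdot \V u(\xi)$ a \emph{single‑channel} field. As a fixed linear combination of jointly circular complex Gaussian fields, $w_k$ is itself a zero‑mean circular complex Gaussian field; and because each $u_d$ is spatially white while the combination is pointwise in space, $w_k$ remains spatially white, with pointwise variance $[\V v_k]_\ell = \V p_k^\dagger \cdot \M C_\ell \cdot \V p_k$.

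The key --- and only --- step that uses the hypothesis comes now: since $h$ is real, convolution by $h$ does not mix real and imaginary parts, hence $\mathrm{Re}(\V s_k) = h \ast \mathrm{Re}(w_k)$ and $\mathrm{Im}(\V s_k) = h \ast \mathrm{Im}(w_k)$. A short second‑moment computation shows that for a field that is \emph{both} spatially white and circular, all cross‑covariances $\mathbb{E}\!\left[\mathrm{Re}(w_k)_\xi\,\mathrm{Im}(w_k)_{\xi'}\right]$ vanish --- not merely at $\xi = \xi'$, where circularity alone already gives pointwise decorrelation, but at every pair of sites, thanks to whiteness. Filtering $\mathrm{Re}(w_k)$ and $\mathrm{Im}(w_k)$ by the \emph{same real} filter $h$ therefore yields $\mathbb{E}\!\left[\mathrm{Re}(\V s_k)_\ell\,\mathrm{Im}(\V s_k)_{\ell'}\right] = 0$ for all $\ell,\ell'$; being jointly Gaussian and uncorrelated, the two fields $\mathrm{Re}(\V s_k)$ and $\mathrm{Im}(\V s_k)$ are thus statistically independent. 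This is exactly the hypothesis MERLIN \cite{dalsasso2021if} relies on: one of the two halves is fed to the network, and the other, whose conditional distribution is governed by $\V v_k$ (the quantity the network estimates), is used to form the training loss. I would conclude by contrasting this with the real/imaginary split of a genuinely multi‑channel image, where the off‑diagonal entries of $\M C_\ell$ are complex and couple the real and imaginary channels, so that split no longer produces two independent halves \cite{10497596}.

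The main obstacle is conceptual rather than computational: one must make explicit that pointwise independence of the real and imaginary parts of a circular Gaussian variable --- which is elementary --- is \emph{not} what MERLIN needs; the scheme requires the two whole \emph{images} to be independent, i.e.\ decorrelation at all spatial lags, and it is the conjunction of the real‑valued response and the spatial whiteness of $\V u$ (ensuring the only correlations injected are the real, symmetric ones coming from $h$) that delivers it. A secondary point worth stating cleanly is that the argument never assumes $\M C_\ell$ to be spatially constant: the computation uses only circularity and spatial whiteness of $\V u$, both of which hold pixel by pixel.
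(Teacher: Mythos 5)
Your proposal is correct and follows essentially the same route as the paper's proof: both express the observed projection as a real spatial system response applied to a spatially white circular Gaussian field, reduce independence of the real and imaginary images to the vanishing of their cross-covariance at all pixel pairs, and obtain that vanishing from spatial whiteness (off-diagonal terms) together with pointwise circularity of the scalar projection (diagonal terms), concluding by joint Gaussianity. The only cosmetic difference is that you write the response as a convolution $h\ast{}$ whereas the paper uses a general real linear operator $\M H$.
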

\begin{proof}
In section \ref{sec:muchapro:ppe} we considered an ideal SAR system response. Yet, in practice,
due to zero-padding and spectral apodization, a low-pass filtering correlates the speckle. Let
$\M H\in\mathbb{R}^{N\times N}$ be the linear operator, in the spatial domain, that models this low-pass
response. Assuming that $\M H$ is real-valued amounts to a transfer function with Hermitian
symmetry for shift-invariant systems. Paragraph \ref{sec:muchapro:selfsup} discusses how to
ensure that $\M H$ is real-valued. Let us further assume that this SAR system response is applied identically and separately
to all channels of the ideal multi-channel SAR image $\V z$:
$$\forall \ell,\,[\tilde{\V z}]_\ell=\sum_{i=1}^N \text{H}_{\ell i}[\V z]_i\,,$$
where $[\tilde{\V z}]_\ell\in\mathbb{C}^D$ corresponds to the vector of complex amplitudes in the
actual SAR image (the image when accounting for the SAR system response, with spatially correlated
speckle). The projections $\{\V s_k\}_{k=1..K}$ are computed from the multi-channel image $\tilde{\V z}$:
$$
    \forall \ell,\,[\V s_k]_\ell=\V p_k^\dagger \cdot [\tilde{\V z}]_\ell\,.
$$
Since $\V s_k$ is obtained by applying linear transforms to complex circular Gaussian random
vectors, the real and imaginary parts $\mathbb{R}\text{e}(\V s_k)$ and $\mathbb{I}\text{m}(\V s_k)$
are jointly Gaussian. Therefore, they are independent if and only if
$\text{Cov}\left[\mathbb{R}\text{e}([\V s_k]_\ell),\mathbb{I}\text{m}([\V s_k]_m)\right]=0$ for
all $\ell$ and all $m$. Since the real and imaginary parts are centered, this covariance is equal
to
\begin{multline*}
    \mathbb{E}\left[\mathbb{R}\text{e}([\V s_k]_\ell)\cdot\mathbb{I}\text{m}([\V s_k]_m)\right]=
    \mathbb{E}\biggl[\mathbb{R}\text{e}\biggl(\V p_k^\dagger\sum_{i=1}^N \text{H}_{\ell i}[\V z]_i\biggr)\\
    \cdot
    \mathbb{I}\text{m}\biggl(\V p_k^\dagger\sum_{i=1}^N \text{H}_{m i}[\V z]_i\biggr)\biggr]
\end{multline*}
Since $\text{H}_{\ell i}$ and $\text{H}_{m i}$ are both real, this expectation is equal to
$$\sum_{i=1}^N\sum_{j=1}^N \text{H}_{\ell i}\text{H}_{m j} \mathbb{E}\left[
    \mathbb{R}\text{e}\bigl(\V p_k^\dagger[\V z]_i\bigr)
    \mathbb{I}\text{m}\bigl(\V p_k^\dagger[\V z]_j\bigr)
\right]$$
The $D$-dimensional random vectors $[\V z]_i$ and $[\V z]_j$ are independent when $i\neq j$,
therefore the expectation is zero in that case. Since
$\V p_k^\dagger[\V z]_\ell\sim\mathcal{N}_{\text{c}}([\V v_k]_\ell)$ and $[\V v_k]_\ell$ is
a scalar, the real and imaginary
parts of $\V p_k^\dagger[\V z]_\ell$ are independent and the expectation is also equal to
zero when $i=j$. We have shown that $\mathbb{R}\text{e}(\V s_k)$ and $\mathbb{I}\text{m}(\V s_k)$
are jointly Gaussian and decorrelated, they are therefore independent which is the requirement
to apply MERLIN self-supervised training, as shown in \cite{dalsasso2021if} and further discussed
in section \ref{sec:muchapro:selfsup}.
\end{proof}

\begin{figure}[t]
    \includegraphics[width=\columnwidth]{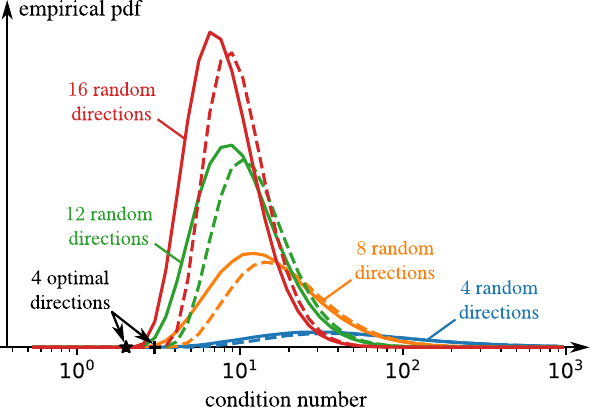}
    \caption{Empirical probability density function of the condition number of matrix $\M Q\cdot\M Q^\dagger$
    for projection directions $\V p_k$ independently drawn according to a standard Gaussian
    distribution. The case where matrix $\M Q$ is built without any constraint on matrix $\M C$
    is displayed with dashed lines. The case where the Hermitian symmetry is leveraged and
    matrix $\M Q$ is defined by (\ref{eq:matQH}) is represented with solid lines. In this
    experiment, $D=2$ and $K$ varies between $D^2$ and $4D^2$.}
    \label{fig:cond}
\end{figure}

\begin{figure}[t]
    \includegraphics[width=\columnwidth]{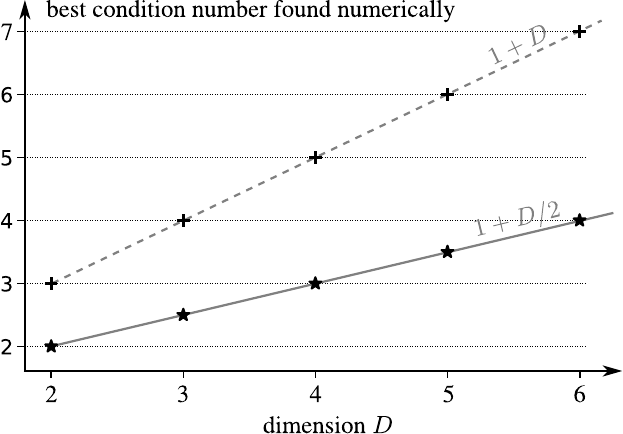}
    \caption{Evolution of the optimal condition number found numerically, as a function of
    the dimension $D$, and expression conjectured from the first 5 points. Solid line: inversion
    using the Hermitian symmetry of the covariance matrix; dashed line: without symmetry
    constraint.}
    \label{fig:cond_trend}
\end{figure}

\subsection{Selecting the projections}
\label{sec:muchapro:selproj}
The directions $\{\V p_k\}_{k=1..K}$ onto which the multi-channel image is projected can be chosen
arbitrarily provided that the covariance matrices can be recovered by least squares inversion.
In particular, a necessary condition for matrix $\M Q\cdot\M Q^\dagger$ to be invertible is that
$K\geq D^2$. The stability of the inversion depends on the condition number of this matrix.
 The collection of directions should therefore be chosen such
that the condition number be as low as possible. 

A simple strategy to set the projection directions could be to use random directions.
As shown in Fig. \ref{fig:cond}, random directions can lead to very poor condition numbers.
The set of random directions leading to the best condition number among one million trials
is still about 10 to 15\% worse than the best directions identified by optimization when
$D=2$ and two orders of magnitude larger when $D=6$.
Exploiting the Hermitian symmetry
offers an improvement of the condition number both for random directions (the distributions
are shifted to the left in Fig. \ref{fig:cond}) and for the optimal directions found by 
optimization using the technique described below (the best condition number drops from 3
to 2 in the 2-dimensional case illustrated in Fig. \ref{fig:cond}, the gap between
the two approaches increases for higher dimensions, as shown in Fig.\ref{fig:cond_trend}).

Optimizing the condition number
of a Gram matrix is a non-smooth and non-convex problem and is thus non-trivial to solve,
see \cite{chen2011minimizing} for an approach based on the Clarke generalized gradient and
an exponential smoothing. 
We tested several optimization strategies to find directions leading to the best possible
condition number. We found experimentally that a quasi-Newton method (L-BFGS
\cite{nocedal1999numerical})
combined with the smoothing approach of Chen {\it et al.} \cite{chen2011minimizing} 
 and Pytorch automatic differentiation was quite efficient when combined with multiple
 random initializations (at least for the small values of $D$
 shown in Fig. \ref{fig:cond_trend}). The smoothing consists of replacing the condition number of a
 Hermitian matrix $\M A$ with the following approximation
 \begin{align}
    \text{cond}_\mu(\M A)=-\frac{\log\left(\sum_{i=1}^n e^{\lambda_i(\M A)/\mu}\right)}
    {\log\left(\sum_{i=1}^n e^{-\lambda_i(\M A)/\mu}\right)}
 \end{align}
 where $\lambda_1(\M A)\geq \cdots \geq \lambda_N(\M A)$ are the eigenvalues of
matrix $\M A$ and $\mu$ is a smoothing parameter.

%gradient estimated by finite differences was more efficient than derivative-free methods
%like Nedler-Mead \cite{nelder1965simplex}. 
Interestingly, while increasing the number of random directions helps to improve the
condition number (see the evolution from 4 random directions to 16 random directions in
Fig. \ref{fig:cond}), the optimal condition number found by numerical optimization is
reached even for the lowest number of directions, i.e., $D^2$. This means that the computational
overhead required by processing projections rather than directly handling the multi-channel
images remains
limited and is in fact similar to that of some multi-channel approaches 
(for example, it involves the same number of 2D image denoising operations as a
single iteration of MuLoG algorithm \cite{deledalle2017mulog,deledalle2022speckle}).
%such as MuLoG which
%perform a decomposition of the multi-channel SAR image into $D^2$ real-valued channels
%and apply an independent denoising of each channel, several times.

The best condition numbers obtained by numerical optimization in the cases $D=1,\,2,\,3\ldots$
are very close to integer or half-integer values, which is a strong indication 
that particular structures for the optimal matrices $\M P$ and $\M Q$ may be found. We
could identify such structures in the case without constraint, the more interesting
case of the Hermitian symmetry seems more intricate, though, and we could not figure out
a simple
expression for the optimal projections. We have to rely on the numerical search to identify
these projections. Fig. \ref{fig:cond_trend} reports the evolution of the best condition
number obtained for different dimensions $D$. We conjecture that these condition numbers
follow a simple affine trend: $1+D$ (formulation without constraint) or $1+D/2$ (formulation
leveraging the Hermitian symmetry). 
%We provide on the repository associated with the paper
%optimal directions for $2\leq D\leq 6$. 
Note that, while it is numerically more stable
to use well-conditioned matrices, it is not mandatory to identify the best
overall projection directions for the method to work well.

\begin{figure*}[!t]
    \includegraphics[width=\textwidth]{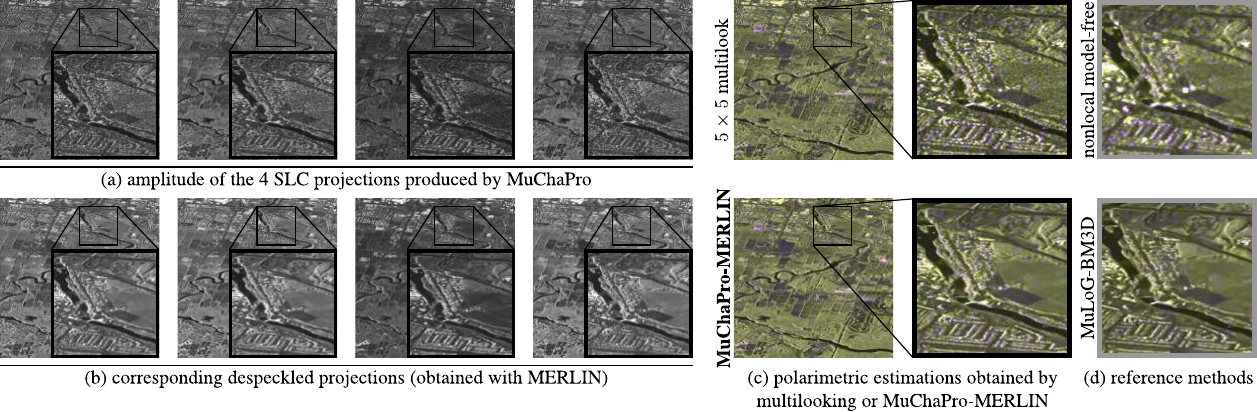}\vspace*{-.5\baselineskip}
    \caption{Application of \algoname to a dual-polarization TerraSAR-X image
    (\copyright ESA/DLR). To estimate the $2\times 2$ polarimetric covariance
    matrices at each pixel, 4 SLC projections are generated in \algoname algorithm
    (shown in (a)). These projections are despeckled independently using a
    deep neural network trained beforehand with MERLIN (despeckling
    results are given in (b)). These despeckled projections are then recombined
    using Algorithm 1, equation (\ref{eq:matQH}), and Algorithm 2 to obtain the
    estimation shown in (c) bottom row, to be compared with a $5\times 5$ multilooking
    in the top row \Change{and the two reference methods in (d): the model-free non-local approach \cite{aghababaei2021nonlocal} and MuLoG-BM3D \cite{deledalle2022speckle}}.}
    \label{fig:dualpol_results}
\end{figure*}

\subsection{Self-supervised learning for multi-channel despeckling}
\label{sec:muchapro:selfsup}
We introduced in \cite{dalsasso2021if}, under the name MERLIN, a self-supervised training
strategy to learn despeckling networks based on the decomposition of SLC images into
real and imaginary parts.
Due to coherence between the channels of a polarimetric or interferometric SAR images, the
real and imaginary values  of multi-channel SAR images are not independent and a direct
extension of the MERLIN framework is not possible for these images. Yet, once projected,
we have shown in Proposition \ref{prop:ReImindep} that the real and imaginary parts of each
projection are statistically independent. It is then possible to apply self-supervised learning
to these images.
In the proof of Proposition \ref{prop:ReImindep} we used the fact that the SAR system response was
real-valued. If this is not the case, for instance, due to a zero-Doppler shift, then correlations
appear and the training does not lead to satisfactory results \cite{dalsasso2023self}: speckle
fluctuations remain because the network could guess the values of the imaginary component based
solely on the values of the real component. SAR images captured by a satellite in stripmap mode
require a simple centering of the Doppler centroid. Other modes like spotlight or TOPS and the
use of a large bandwidth in airborne imaging require additional processing to ensure that the
response is real-valued \cite{dalsasso2023self}. The power spectrum density of the SLC image
is expected to be symmetrical when the response is real. Spectral shifts, demodulation, and
restriction to a symmetrical support can be used to enforce this symmetry.

Once the symmetry of the spectrum has been ensured, a single-channel deep neural network can be
trained by feeding the real (or imaginary) component to the network and evaluating the likelihood
of the reflectivities estimated by the network with respect to the other component (imaginary
or real), as described in \cite{dalsasso2021if}. Since projections are similar irrespective of the
number of channels, a single network can be trained for a given sensor
and then applied to restore polarimetric and interferometric acquisitions from the same sensor.
The ability to vary the number of channels without the need to retrain the network combined with
the possibility to apply self-supervised learning represent very appealing features of the
\algoname framework.

\begin{figure*}[!t]
    \includegraphics[width=\textwidth]{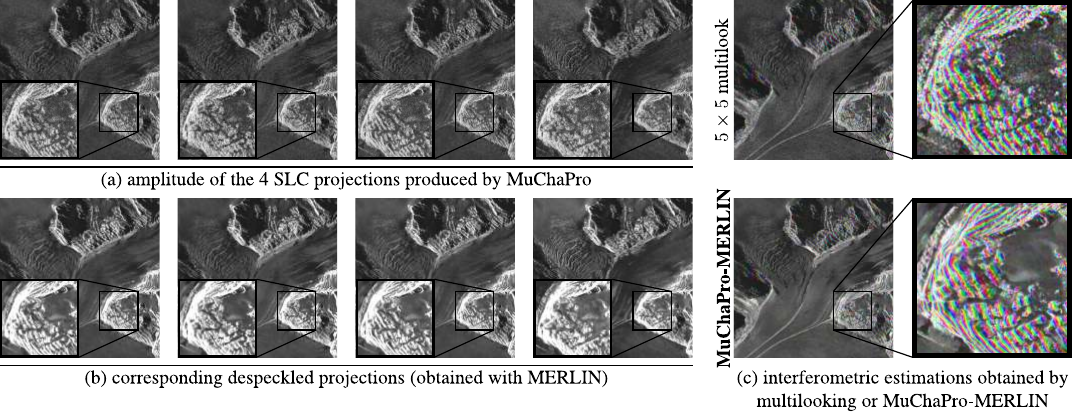}\vspace*{-.5\baselineskip}
    \caption{Application of \algoname to an interferometric pair of TerraSAR-X images
    (\copyright ESA/DLR) of the confluence of glaciers named \emph{Konkordiaplatz}, close
    to the Jungfrau summit in Switzerland.
    As in Fig. \ref{fig:dualpol_results}, we show in (a) the noisy projections and in (b)
    the despeckling results. Combining these restored projections leads to MuChaPro
    estimates shown in (c). The use of MERLIN despeckling network gives a better preservation
    of interferometric fringes than by multilooking, with an improved spatial smoothing.
    In this color composition, the hue corresponds to the interferometric phase, the
    saturation to the coherence, and the value is given by the restored reflectivity of
    the reference date.}
    \label{fig:Jungfrau_results}
\end{figure*}

\section{Application to PolSAR, InSAR, and multi-baseline InSAR}
\label{sec:appli}

In this section, we illustrate the application of \algoname to polarimetric, and interferometric
SAR images. \Change{Note that \algoname performs equally well, in the single-channel case, as any despeckling technique provided that the same technique is used within \algoname: projections in the single-channel case boil down to multiplications by a scalar value. In what follows, illustrations and comparisons therefore only cover multi-channel cases.}

We first illustrate the applications of the method in polarimetry and interferometry.
Figure \ref{fig:dualpol_results} illustrates the application of \algoname
on a TerraSAR-X dual-polarization image
(area north of Belleville, Ontario, Canada, captured on June 3, 2016 \copyright DLR/ESA).
Projections along 4 optimal directions (optimal in the sense of the condition number, as
discussed in paragraph \ref{sec:muchapro:selproj})
are shown in Fig.\ref{fig:dualpol_results}(a). Application of the MERLIN despeckling
algorithm\footnote{we used the pretrained network made available by Telecom Paris and
Hi! PARIS at \url{https://github.com/hi-paris/deepdespeckling/}, after a resampling step
to obtain the same transfer function as single-polarization TerraSAR-X images used during
the training phase of the network} leads to the restored images in Fig.\ref{fig:dualpol_results}(a).
Application of the inversion formulae (\ref{eq:muchaproformula}) and (\ref{eq:matQH})
give the restored polarimetric image shown in Fig.\ref{fig:dualpol_results}(c), bottom row.
A qualitative comparison with the multilooked polarimetric estimation shows that polarimetric
behaviors are in agreement and that the spatial resolution is better preserved when applying
\algoname with MERLIN. Note that we have checked numerically that applying multilooking to the polarimetric
image gives the exact same result as multilooking the projections first and then inverting
the projections to reconstruct the covariance matrices, as stated in Proposition \ref{prop:equivlin}. \Change{Restorations obtained with two polarimetric despeckling techniques are shown in Fig.\ref{fig:dualpol_results}(d): with the model-free nonlocal method \cite{aghababaei2021nonlocal} (first row) and the plug-in ADMM method MuLoG \cite{deledalle2022speckle}. \algoname achieves a better tradeoff between the reduction of speckle fluctuations and the preservation of textures and sharp structures. When using the deep-neural network MERLIN to perform the single-channel despeckling, MuChaPro is also fast: processing the dual-polarization image of size $2336\times 2336$ pixels takes less than 3 seconds, to be compared with the model-free nonlocal method (approx. 8 hours) and MuLoG (approx. 4 minutes).}

In figure \ref{fig:Jungfrau_results}, we illustrate the application of \algoname to an 
interferometric couple of TerraSAR-X Stripmap images captured close to the Jungfrau summit in 
Switzerland on July 4, 2020, and July 15, 2020 (\copyright DLR/ESA).
The same optimal projection directions are used as in Fig.\ref{fig:dualpol_results}. The 
combination of the two images leads to visible fringes in some of the projections, see 
\ref{fig:Jungfrau_results}(a) and \ref{fig:Jungfrau_results}(b).
After inversion, the interferometric fringes are recovered. In \ref{fig:Jungfrau_results}(c), 
we show the recovered interferogram in the form of a color image with the hue encoding the 
interferometric phase, the saturation the coherence, and the value the reflectivity of the 
reference image. In this mountainous area at the confluence of several glaciers, there are 
low-coherence rapidly moving areas and higher-coherence areas where topographic fringes are
recovered. The qualitative comparison of the interferogram obtained by $5\times 5$ multilooking
and with \algoname shows that the fringes have similar locations and orientations but that 
thin fringes are better recovered by \algoname in conjunction with MERLIN and appear much smoother.
Note that to slightly improve the interferogram estimation, we performed a despeckling of each image of
the interferometric pair and substituted the reflectivities recovered in MuChaPro's 
interferogram with those estimates. %obtained in the interferogram.

\begin{figure*}[t]
    \includegraphics[width=\textwidth]{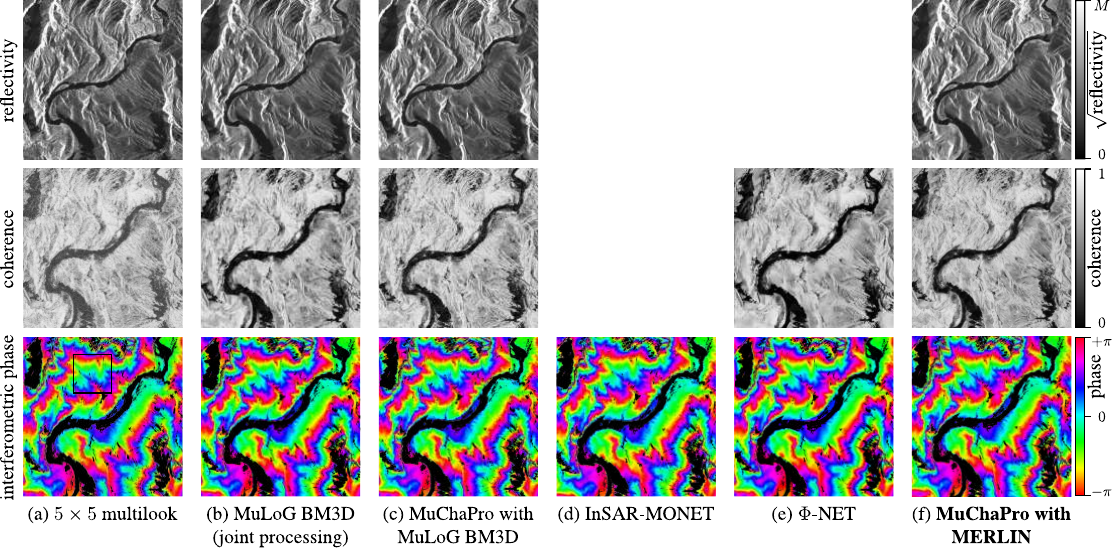}
    \caption{Computation of an interferogram from a pair of TerraSAR-X images over the Colorado
    river \copyright Airbus Defence and Space. Images are best viewed on-screen with a strong magnification.
    Fig.\ref{fig:Colorado_results_zoom} gives a zoomed-in version of the area depicted
    by a black square in column (a), last row.}
    \label{fig:Colorado_results}
\end{figure*}

\begin{figure*}[t]
    \includegraphics[width=\textwidth]{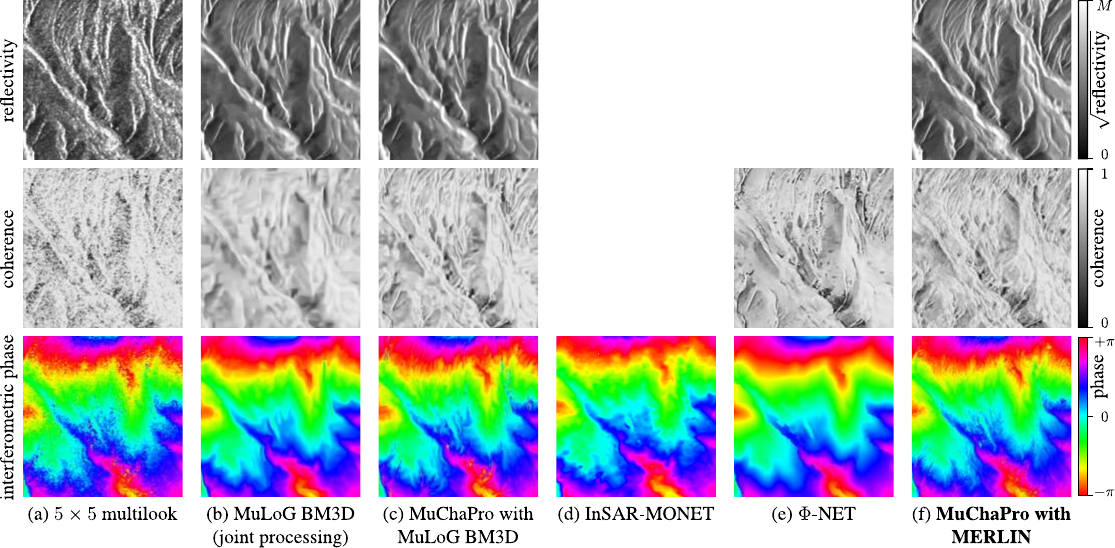}
    \caption{Close-up view of the area delimited by a black rectangle in 
     Fig.\ref{fig:Colorado_results}(a) (SAR image \copyright Airbus Defence and Space).}
    \label{fig:Colorado_results_zoom}
\end{figure*}

\medskip
In figure \ref{fig:Colorado_results}, we consider another TerraSAR-X Stripmap interferogram over an area of the Grand Canyon, Arizona, USA (\copyright Airbus Defence and Space), where the coherence is much larger. We compare the interferograms 
recovered by several methods: (a) multilooking, (b) multi-channel estimation with MuLoG 
\cite{deledalle2017mulog,deledalle2022speckle}, (c) the application of MuLoG to the projections
produced by \algoname followed by the inversion of the projections to recover the 
interferogram, (d) the deep-learning method InSAR-MONET \cite{vitale2022insar} (that only
produces an estimate of the interferometric phase), (e) the deep-learning method $\Phi$-NET 
\cite{sica2021net} (that gives both an estimation of the phase and coherence), and 
finally (f) \algoname with MERLIN as a single-channel image despeckling method.
Before processing the images with MuLoG, we performed a downsampling by a factor 2 to
reduce the spatial correlations of the speckle. The images have been resampled to remove 
all zero-padding and reduce speckle correlations before applying InSAR-MONET and $\Phi$-NET,
we found this preprocessing step to improve the results with these networks.
When applying MERLIN, we did not resample the images since the network was already trained 
to handle the spatially correlated speckle in TerraSAR-X single-polarization Stripmap images.
Figure \ref{fig:Colorado_results_zoom} offers zoomed-in views to analyze the details 
recovered by each method. The interferometric phases are in agreement with all methods
but are much more noisy with multilooking. Interestingly, \algoname seems to better 
transfer the fine spatial details observed in the reflectivity to the coherence and 
interferometric phase channels. 
%Note that we ran both InSAR-MONET and $\Phi$-Net on
%the full-resolution images and on images downsampled to reduce speckle correlations and 
%found the results to be better 

\begin{figure*}[t]
    \includegraphics[width=\textwidth]{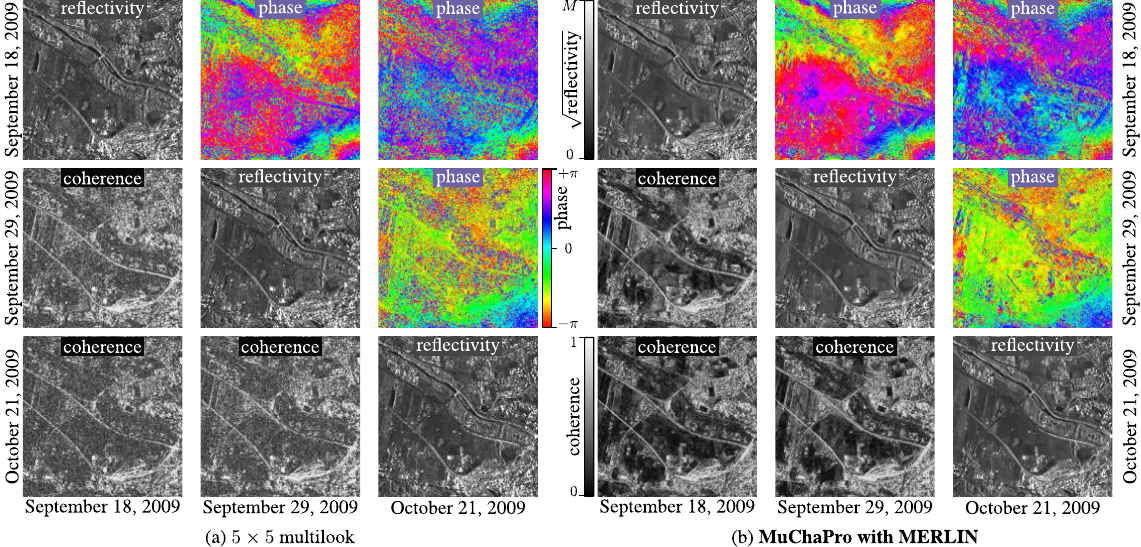}
    \caption{Application of MuChaPro to multi-baseline interferometry: TerraSAR-X images
    captured over Domancy, France, at 3 different dates are combined in a multi-baseline
    interferometric covariance matrix.}
    \label{fig:Domancy_results}
\end{figure*}

\medskip
Finally, we show in Figure \ref{fig:Domancy_results} that \algoname can also be applied to
multibaseline interferometry. We processed 3 TerraSAR-X Stripmap images of the city of Domancy, France
 (\copyright DLR) captured in September and October 2009. Parts of the images covered by 
 vegetation display a low coherence. This coherence tends to be over-estimated by multi-looking.
 The interferometric phase estimated by \algoname and multilooked are in agreement at a 
 large scale. It is less noisy, in particular in low coherence areas corresponding to fields on the
 lower-diagonal part of the image.

 %\medskip
 
\section{Conclusion}
This paper introduced a new generic framework to perform multi-channel SAR despeckling using only
single-channel despeckling algorithms. The idea is to produce a set of single-channel SAR images
by projecting the multi-channel SAR image onto different directions. Once despeckled, these images
give access to projections of the underlying covariance matrices which can be recovered thanks to
a linear least-squares estimator.

This approach is equivalent to multi-channel processing when linear filters are applied. It also makes
the self-supervised learning strategy MERLIN possible. To the best of our knowledge, this is the
first time that a self-supervised training strategy has been proposed for interferometric and polarimetric
data. A key advantage of the method is that, once a network is trained for a given sensor, it can
be readily applied to multi-channel images of the same sensor whatever their dimensionality: dual-pol,
full-pol, InSAR, multi-baseline InSAR, \ldots

This flexibility comes at a price: by despeckling separately single-channel images, \algoname does
not benefit from the redundancy present in multi-channel SAR imaging when restoring weakly contrasted spatial
structures. The ability to train networks in a self-supervised way, a possibility that is not (yet)
offered by joint despeckling techniques, tends to mitigate this drawback, though.

The application of \algoname over different TerraSAR-X images in polarimetric or 
 interferometric configurations confirms the potential of the proposed approach to recover 
 the information with a high spatial resolution. Further work is necessary to perform a
 more in-depth analysis of the performance and limits of this approach.

% use section* for acknowledgment
\section*{Acknowledgment}

The authors would like to thank the French National Research Agency (ANR) and the Direction Générale de l’Armement (DGA) under ASTRAL project ANR-21-ASTR-0011 for funding this research.
They are grateful to ESA for making available TerraSAR-X images through the 
TerraSAR-X ESA archive collection, disseminated under ESA's Third Party Missions.
The SAR image of Domancy was obtained from the DLR under project LAN-1706. The SAR image of the Colorado is part of sample data freely distributed by Airbus Industry.

% Can use something like this to put references on a page
% by themselves when using endfloat and the captionsoff option.
\ifCLASSOPTIONcaptionsoff
  \newpage
\fi

\bibliographystyle{IEEEtran}
%\bibliographystyle{alpha}
% argument is your BibTeX string definitions and bibliography database(s)
\bibliography{IEEEabrv,refs}

\end{document}